\documentclass[conference]{IEEEtran}
\IEEEoverridecommandlockouts
\usepackage{amsmath}
\usepackage{cite}
\usepackage{url}
\usepackage{amsmath}
\usepackage{amsthm}
\theoremstyle{plain}
\newtheorem{lemma}{Lemma}
\usepackage{amsmath,amssymb,amsfonts}
\usepackage{algorithmic}
\usepackage{graphicx}
\usepackage{textcomp}
\usepackage{comment}
\usepackage{xcolor}
\def\BibTeX{{\rm B\kern-.05em{\sc i\kern-.025em b}\kern-.08em
    T\kern-.1667em\lower.7ex\hbox{E}\kern-.125emX}}
\begin{document}

\title{Energy-Efficient SLAM via Joint Design of Sensing, Communication, and Exploration Speed}
\author{Zidong Han\textsuperscript{1,2}, \text{Ruibo Jin}\textsuperscript{2}, \text{Xiaoyang Li}\textsuperscript{3}, \text{Bingpeng Zhou}\textsuperscript{4}, \text{Qinyu Zhang}\textsuperscript{2} \text{ and } \text{Yi Gong}\textsuperscript{1}
\\1 Southern University of Science and Technology, Shenzhen, China
\\ 2 Harbin Institute of Technoglogy (Shenzhen), Shenzhen, China
\\ 3 Shenzhen Research Institute of Big Data, The Chinese University of Hong Kong - Shenzhen, China
\\ 4 The School of Electronics and Communication Engineering, Sun Yat-sen University, Shenzhen, China
\\\{hanzd, gongy, 12110819\}@sustech.edu.cn, zqy@hit.edu.cn, lixiaoyang@sribd.cn, zhoubp3@mail.sysu.edu.cn
}

\maketitle

\begin{abstract}
To support future spatial machine intelligence applications, lifelong \emph{simultaneous localization and mapping} (SLAM) has drawn significant attentions. SLAM is usually realized based on various types of mobile robots performing simultaneous and continuous sensing and communication. This paper focuses on analyzing the energy efficiency of robot operation for lifelong SLAM by jointly considering sensing, communication and mechanical factors. The system model is built based on a robot equipped with a 2D \emph{light detection and ranging} (LiDAR) and an odometry. The cloud point raw data as well as the odometry data are wirelessly transmitted to data center where real-time map reconstruction is realized based on an unsupervised deep learning based method. The sensing duration, transmit power, transmit duration and exploration speed are jointly optimized to minimize the energy consumption. Simulations and experiments demonstrate the performance of our proposed method. 
\end{abstract}

\begin{IEEEkeywords}
Lifelong SLAM, wireless communication, deep learning, map reconstruction 
\end{IEEEkeywords}

\section{Introduction}
With the rapidly growing need of spatial machine intelligence, generalized \emph{integrated sensing and
communication} (ISAC) technologies have attracted tremendous academic and industrial attentions \cite{liu2022integrated}. As a fundamental technology, \emph{simultaneous localization and mapping} (SLAM) can support a series of spatial intelligence applications, such as auto-driving and unmanned factory, which highly depend on various types of mobile robotic agents \cite{slam-survey1}. Most of the robots are expected to perceive the environment, estimate their system states, interact with edge server and/or other agents via wireless communications, and make decisions autonomously. 

However, real-world environments are usually non-static due to the dynamic changes caused by both ephemeral and persistent objects, which leads to inaccuracy of localization and mapping\cite{LT-mapper}. To address this problem, lifelong SLAM has been proposed to continuously build and maintain the map \cite{lifelong1,lifelong2,lifelong3}. Meanwhile, the sensed data and state information should be delivered on time over time-varying communication links \cite{7393435}. The dynamic properties of both physical environment and communication condition affect the robustness as well as the performance of decision-making and control.

For lifelong SLAM, the long-term energy efficiency and endurance of these robots should be concerned, since they are usually powered by batteries. For the case that the computation in SLAM is offloaded to edge server, the robot generally consists of perception sensors, odometry, electric motors, \textit{micro control unit} (MCU) module, and wireless communication modules. Traditionally, the energy consumptions for robot sensing, movement and wireless data transmission are analyzed independently in its own field, but in fact, they are coupled. The integrated design of sensing, communication, decision and control has drawn great attention in communications society very recently \cite{codesignforSI}. For example, if the sensed data is wirelessly transmitted to edge server for real-time mapping after each sensing period, then for a specific task area, changing movement speed will change the places that communications happen and thus influence the energy consumption for both movement and communication. Meanwhile, the corresponding change of total data amount will also affect the mapping performance. Moreover, regarding to the scalability problem in different scenarios, e.g., different map structures or sizes, the proportions of sensing, movement and communication energy consumption are quite different. How to analyze and optimize the energy efficiency of the robot system operation for completing SLAM tasks is an open issue.

In this paper, we investigate an energy consumption minimization problem for real-time mapping under the timeliness requirement. The system model is established based on an actual mobile robot system equipped with a \emph{two-dimensional light detection and ranging} (2D-LiDAR), an odometry and a wireless communication module. The SLAM task is divided into several operation periods according to the exploration speed and 360-degree sensing duration. The data generated by LiDAR and odometry is wirelessly transmitted to the edge server for map reconstruction which is realized by \textit{deep neural networks} (DNN) based method. The sensing duration, transmit power, transmission duration and exploration speed are jointly designed to minimize the energy consumption. We build a square area and acquire the raw data from LiDAR and odometry to establish a dataset for map learning. We classify it into edge and corner subset in our experiment, and we find it have good scalability.  Our results demonstrate a promising new perspective to design energy-efficient SLAM by comprehensively considering sensing, communication and mechanical factors.

\section{SYSTEM MODEL}

\subsection{System Overview}
The SLAM system considered in this paper is composed of a mobile robot and an edge server (data center), as shown in Fig. \ref{fig:System schematics}. The robot is equipped with a 2D-LiDAR module to perform ranging, an odometry module to measure the state of the robot, a wireless module to transmit the data generated from both LiDAR and odometer to the data center, and an MCU to link and control the above modules. After receiving the data, the real-time mapping, i.e., map reconstruction process, is performed at the data center. 
\begin{figure}[h]
    \centering
    \includegraphics[width=0.8\linewidth]{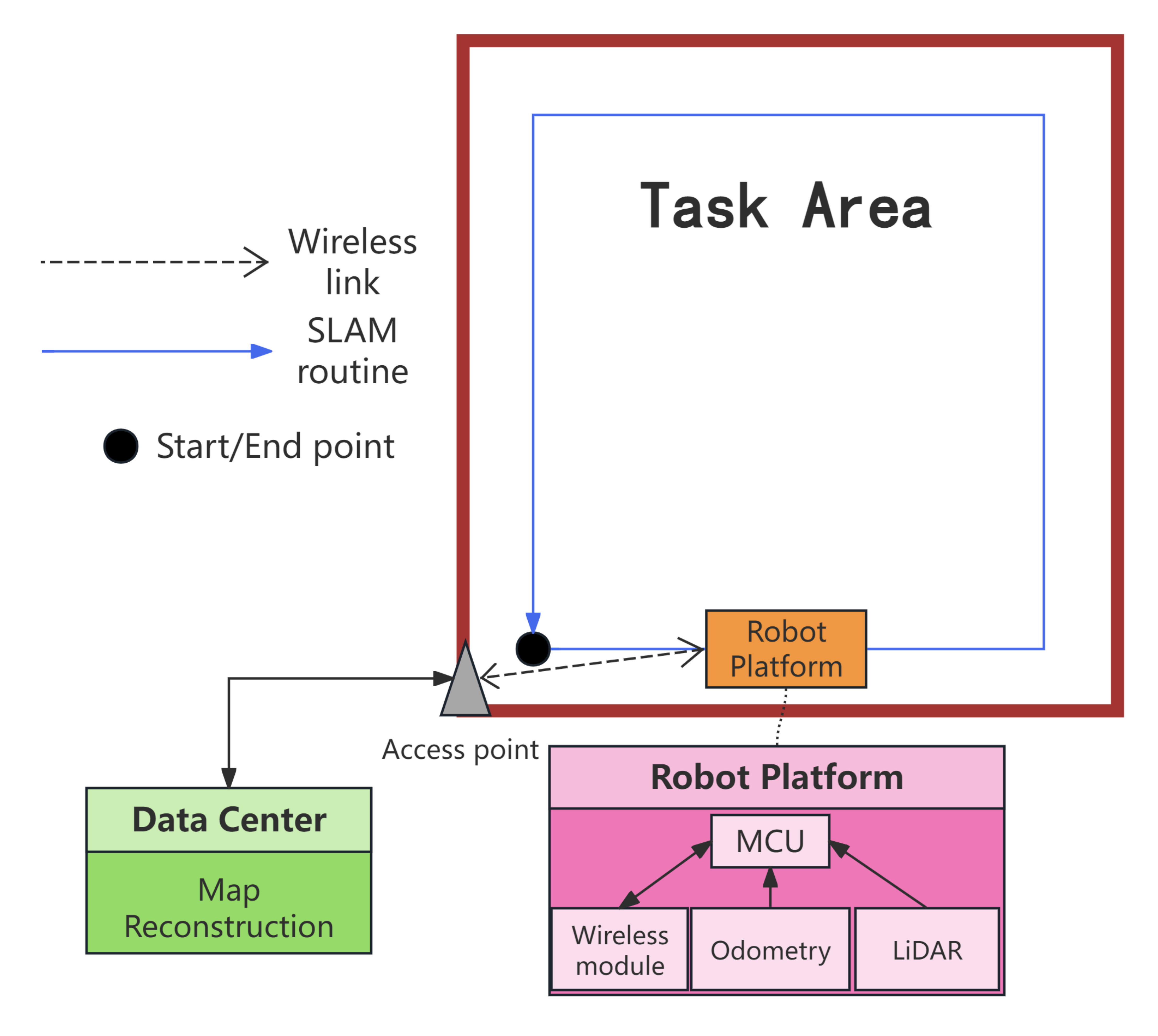}
    \caption{System schematics}
    \label{fig:System schematics}
\end{figure}

The target area under mapping is a square area with a side length of \( L \) m, and an access point linked to the data center is located at one of the four corners. The robot moves along the edge of the area at a distance of \(e\) m in counterclockwise direction\cite{613851}.

The entire SLAM mapping task is divided into several operation periods. For each period, the LiDAR performs a 360-degree sensing process with 1-degree angular resolution. The robot moves and senses simultaneously, and stores the data generated from LiDAR and odometry into cache memory. In next period, the data is transmitted wirelessly from the robot to the data center for map reconstruction. The timing diagram of system operation at robot side is shown in Fig. \ref{fig:Sequence diagram}. It is assumed that the time length for communication is no longer than that for sensing, to ensure the timeliness of data delivery.

Assume the velocity of the robot \textit{v} remains constant in this task, the number of periods $N_m$ is
\begin{equation}
    N_m = \left\lfloor \frac{4(L-2e)}{v t_\text{sens}} \right\rfloor,
\end{equation}
where $t_{sens}$ denotes the 360-degree LiDAR sensing duration, which is equal to the length of each period. It is worth noting that the robot experiences $N_m$  sensing periods, and moves to the end point in the $(N_m+1)$-th period while transmitting the data sensed in the $N_m$-th period.

\begin{figure}[h]
    \centering
    \includegraphics[width=1\linewidth]{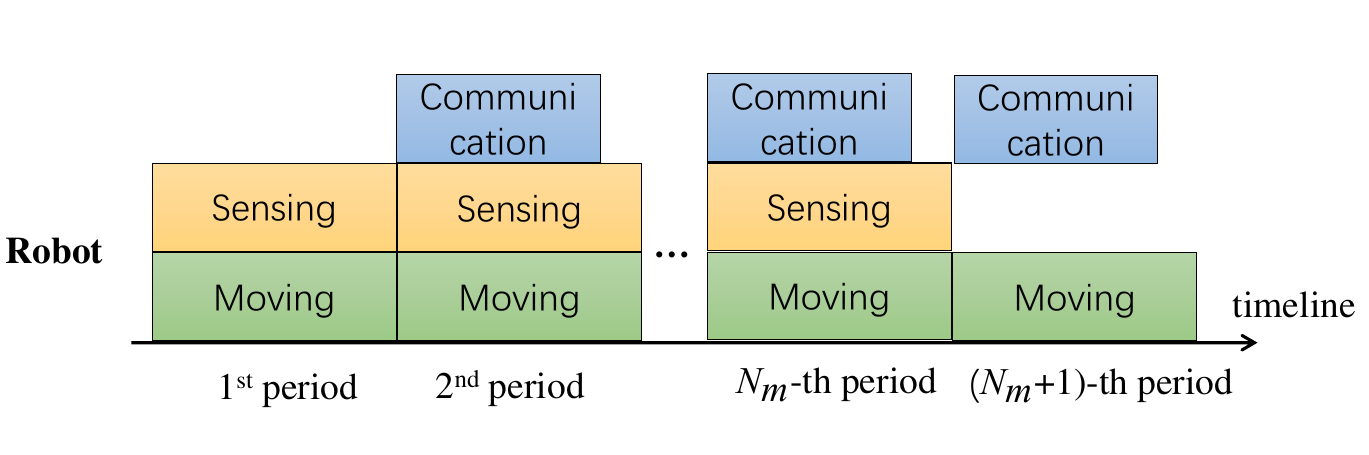}
    \caption{Timing diagram of robot operation}
    \label{fig:Sequence diagram}
\end{figure}

\subsection{Model of sensing process}
The target task area can be transformed into occupancy map \( \mathbf{m}_{0}: \mathbb{R}^{2}\rightarrow \{0,1\} \), where 
\begin{equation}
\mathbf{m}_{0}(x,y)= \begin{cases}
1, & \text{ there is an object at the position \( (x, y) \),}  \\
0, & \text{ there is empty at the position \( (x, y) \)}.  \\
\end{cases}    
\end{equation}
It maps a global coordinate to the corresponding occupancy\cite{8954379}.
For the \( k \)-th period, the robot's state is represented as \( \boldsymbol{V}_k \in \mathbb{R}^{12 \times 1} \), in detail,
\(
\boldsymbol{V}_k = \left(\begin{array}{l}
\overrightarrow{\boldsymbol{p}_k},
\overrightarrow{\boldsymbol{v}_k},
\overrightarrow{\boldsymbol{w}_k},
\overrightarrow{\boldsymbol{a}_k}
\end{array}\right)^T
\).
 \( \overrightarrow{\boldsymbol{p}_{k}} = \left(\begin{array}{l} x_k , y_k , z_k \end{array}\right)^T \) represents the coordinates \( x_k, y_k, z_k\) of the robot position. \( \overrightarrow{\boldsymbol{v}_k} = \left(\begin{array}{l} v_{k, x} , v_{k, y} , v_{k, z},  \end{array}\right)^T \) represents the velocities of the robot in the \( x, y, z \) directions. \( \overrightarrow{\boldsymbol{w}_k} = \left(\begin{array}{l} w_{k,x}, w_{k,y},w_{k,z}\end{array}\right)^T \) represents the angular velocity of the robot in the \( x, y, z \) directions. \( \overrightarrow{\boldsymbol{a}_{k}} = \left(\begin{array}{l} a_{k, x} , a_{k, y} , a_{k, z} \end{array}\right)^T \) represents the accelerations of the robot in the \( x, y, z \) directions.

The data sensed by LiDAR in the \( k \)-th period is  \( \boldsymbol{Z}_k \in \mathbb{R}^{360 \times 1} \), its elements \( Z_{k, \theta}\) are the measured distance between the robot and an object at the angle \( \theta \). So the sensing process is expressed as \cite{bailey2006}
\begin{equation}
\boldsymbol{Z}_k = f\left(\boldsymbol{V}_k, \mathbf{m}_{0}\right) + \boldsymbol{\delta}_k,
\end{equation}
where \( f\left(\boldsymbol{V}_k, \mathbf{m}_{0}\right) \) is the sensing function,   and \( \boldsymbol{\delta}_k \in \mathbb{R}^{360 \times 1} \) represents the sensing error. Assume the sening error is with Gaussian distribution, i.e., \( \boldsymbol{\delta}_k \sim N\left(\mathbf{0}, \boldsymbol{S}_{k}\right) \), where \( \boldsymbol{S}_k = \text{diag}\left(s_{k, 1}, s_{k, 2}, \ldots, s_{k, 360}\right)\in \mathbb{R}^{360 \times 360} \) and \( s_\theta \) is the variance at angle \( \theta \).

The sensed data of the odometry is  \( \boldsymbol{U}_k = \left(\begin{array}{l} a_{k, x}^{\prime} , a_{k, y}^{\prime} , a_{k, z}^{\prime},  w_{k,x}^{\prime}, w_{k,y}^{\prime},w_{k,z}^{\prime}\end{array}\right)^T \in \mathbb{R}^{6 \times 1} \) , where \( a_{k, x}^{\prime}, a_{k, y}^{\prime}, a_{k, z}^{\prime} \) are the measured accelerations of robot in the \( x, y, z \) directions, and \(  w_{k,x}^{\prime}, w_{k,y}^{\prime},w_{k,z}^{\prime} \) are the measured angular velocity of robot in the \( x, y, z \) directions. It can be modeled as 
\begin{equation}
\boldsymbol{U}_k = g\left(\boldsymbol{V}_{k-1}\right) + \boldsymbol{\varepsilon}_k,
\end{equation}
where \( g\left(\boldsymbol{V}_{k-1}\right) \) is the sensing function which measures the acceleration and angular velocity of the robot based on the system state \( \boldsymbol{V}_{k-1} \), \( \boldsymbol{\varepsilon}_k \in \mathbb{R}^{6 \times 1} \) denotes the sensing error under Gaussian distribution, i.e., \( \boldsymbol{\varepsilon}_k \sim N\left(0, \boldsymbol{\Sigma}_k\right) \).

\subsection{Data processing at data center}
\textbf{Map Reconstruction:} We adopt DeepMapping \cite{8954379} for the map reconstruction process, which is based on an unsupervised deep learning method. Fig. \ref{fig:DeepMapping} illustrates the structure of DeepMapping. The estimated point cloud map is updated in real time, denoted by \( \mathbf{M} = \{\mathbf{M}_1, \ldots, \mathbf{M}_{\mathrm{N}_{\mathrm{m}}+1}\} \). The L-Net extracts the features from each local LiDAR data \( \textbf{\textit{Z}}_k \) based on the PointNet \cite{pointnet} and then estimates the pose. KF is Kalman filter to correct the pose estimated by L-net. TF is a transformation module that transforms the local coordinates of \( \textbf{\textit{Z}}_k \) to global coordinates, defined as \( \textbf{\textit{Z}}_{global,k} \). The sampling module is used to samples unoccupied points in the LiDAR scanned areas. Then all the sample points are added to obtain estimated point cloud map \(\textbf{M}_{N_{m+1}}\). M-Net is a binary classification network that uses the estimated point cloud map to predict the occupancy probability for each point being occupied. Those occupancy probabilities are used for computing the loss of map reconstruction.

For M-Net, it is a continuous occupancy map \(m_\phi:\mathbb{R}^{2}\rightarrow [0,1] \) that maps a global coordinate to the corresponding occupancy probability, where \(\phi\) are learnable parameters. Let \( s(\textbf{\textit{Z}}_{global,k}) \) represents the sampling point of the unoccupied area, the loss function is defined as 
\begin{align}
    \mathcal L_{cls}(\phi)=\frac{1}{N_m+1}\sum_{k=1}^{N_m}(B[m_\phi(\textbf{\textit{Z}}_{global,k}),1]+\\B[m_\phi(s(\textbf{\textit{Z}}_{global,k})),0]),
\end{align}
where \(B[m_\phi(\textbf{\textit{Z}}_{global,k}),1]\) denotes the \emph{Binary Cross-Entropy} (BCE) for all points in point cloud \(\textbf{\textit{Z}}_{global,k}\), \(B[m_\phi(s(\textbf{\textit{Z}}_{global,k})),0]\) is the BCE for correspondingly unoccupied locations.

For L-Net, the loss is defined as 
\begin{equation}
    \mathcal L_{ch}(\psi)=\sum_{i=1}^{N_m}\sum_{j\in [i-1,i+1]}D(\textbf{\textit{Z}}_{global,i},\textbf{\textit{Z}}_{global,j}),
\end{equation}
where \(D(Z_{global,i},Z_{global,j})\) represents the Chamfer distance between point cloud \(Z_{global,i}\) and its temporal neighbors \(Z_{global,j}\).

Therefore, the loss function of entire map reconstruction process is defined as 
\begin{equation}
    \mathcal L(\phi,\psi)=\mathcal L_{ch}+\gamma\mathcal L_{cls},
\end{equation}
where \(\gamma\) is a hyperparameter.
\begin{figure}[h]
    \centering
    \includegraphics[width=1.0\linewidth]{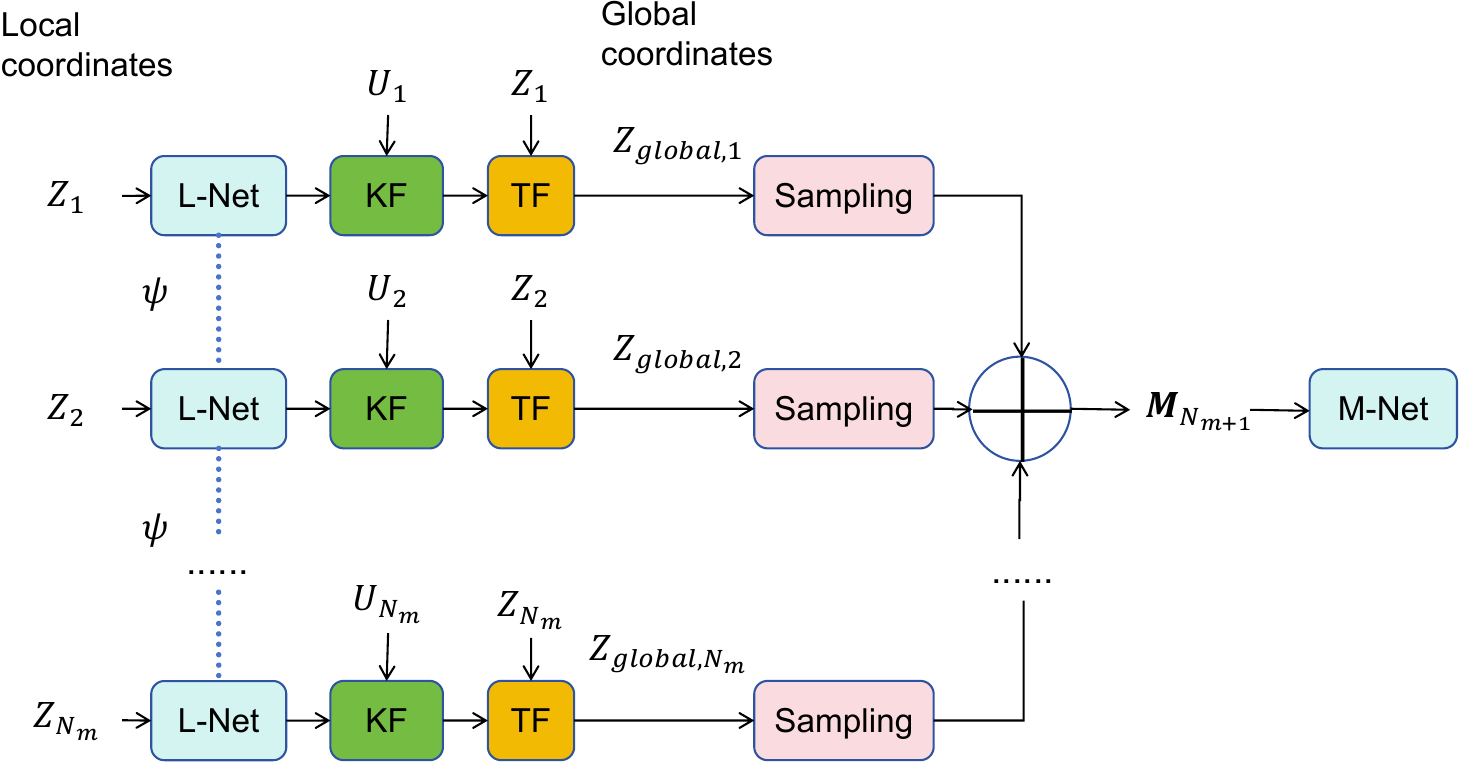}
    \caption{DeepMapping structure}
    \label{fig:DeepMapping}
\end{figure}\\

\subsection{Model of wireless communication process}
Let \( \boldsymbol{b}_k =  (\boldsymbol{Z}_k\thinspace \quad \boldsymbol{U}_k ) ^{\top}\in \mathbb{R}^{363 \times 1} \) denote the required information which is needed for mapping. For each element in \(\boldsymbol{Z}_k\), we assume to use \( a_1 \) bits to express it regarding to data structure. Similarly, \( a_2 \) bits are used for each element in \(\boldsymbol{U}_k\). Thus, let \( \boldsymbol{b}_k^{\prime} \in \mathbb{R}^{(360a_1+6a_2) \times 1} \) denote the complete data sequence in a single period, which is assumed to be coded into several blocks. For simplicity, define $\textbf{\textit{x}}_{k-1}$ as the coded data sequence for the (\textit{k}-1)-th period,
thus the transmission process in the \( k \)-th period can be modeled as
\begin{equation}
    \boldsymbol{y}_k = h_k \boldsymbol{x}_k + \boldsymbol{\omega}_k,
\end{equation}
where \( \boldsymbol{\omega}_k \sim \mathcal{N}(0, \boldsymbol{\sigma}^2_k) \) denotes the Gaussian noise.

Assume the communication process is performed in a multi-path signal propagation environment, there exists a \textit{line-of-sight} (LoS) path \(h_{\text{LoS},k} = \alpha_{0,k} \) and \textit{m} \textit{non-LoS} (NLoS) paths  \( h_{\text{NLoS},k} = \sum_{n=1}^m \alpha_{n,k} e^{j \varphi_{n,k}} \). \( \alpha_{0,k} \) and \( \alpha_{n,k} \) are the amplitude coefficients and \( \varphi_{n,k} \) is the phase shift. Thus, the channel is expressed as
\begin{equation}
    h_k = h_{\text{LoS},k} + h_{\text{NLoS},k}.
\end{equation}
The magnitude of the channel, denoted by \( |h_k| \), follows Rice distribution \cite{rice}. 

Assume the transmit power remains constant within each period and the free-space path loss model is adopted, then the received power at the data center in the \( k \)-th period is given by \cite{friis}:
\begin{equation}
    p_{rx, k}(t)= \frac{p_{tx, k} G_t G_r \lambda^2}{(4 \pi d(t))^2},
\end{equation}
where \( \lambda \) is the signal wavelength, \( G_t \) is the transmit antenna gain of the robot, \( G_r \) is the receive antenna gain of the data center, and \( p_{tx, k} \) is the signal transmit power of the robot. Then the real-time transmission rate \( R_k(t) \) can be expressed as
\begin{equation}
   R_k(t) = B \log_2\left(1 + \frac{ p_{rx, k}(t) |h_k|^2}{\sigma^2_k}\right),
\end{equation}
where $B$ denotes the communication bandwidth. The distance between the robot and the data center \( d(t) \) can be derived as
\begin{equation}
d(t) = \begin{cases}
\scriptstyle \sqrt{(v t + e)^2 + e^2}, & \text{ } 0 \leq t < \frac{L - 2e}{v}, \\
\scriptstyle \sqrt{(v t - L + 3e)^2 + (L - e)^2}, & \text{ } \frac{L - 2e}{v} \leq t < \frac{2(L - 2e)}{v}, \\
\scriptstyle \sqrt{(3L - 5e - v t)^2 + (L - e)^2}, & \text{ } \frac{2(L - 2e)}{v} \leq t < \frac{3(L - 2e)}{v}, \\
\scriptstyle \sqrt{(4L - 7e - v t)^2 + e^2}, & \text{ } \frac{3(L - 2e)}{v} \leq t \leq \frac{4(L - 2e)}{v}.
\end{cases}
\end{equation}
Therefore, the data amount that can be transmitted within the \textit{k}-th period is given by
\begin{align}
\mathrm{I}_k &= \int_{(\mathrm{k} - 1) t_{\text{sens}}}^{(\mathrm{k} - 1) t_{\text{sens}}+t_{\text{comm}}} B \log_2\left(1 + \frac{p_{rx, k}(t) |h_k|^2}{\sigma^2_k}\right) dt,
\label{eq:Total bit}
\end{align}
where $k=2,3,...,N_m+1$ let $t_{\text{comm}}=\rho t_{\text{sens}}$ denote the communication time length with $\rho \in (0,1]$. For simplicity, for the $N_{m+1}$-th period, we assume the communication time length is equal to that in other periods, which does not affect the results since $N_m$ is usually large and the robot is relatively close to the access point.

\subsection{Robot system energy consumption}
The average mechanical power for the robot to move on a flat floor can be modeled as \cite{jetir2204644}

\begin{equation}
    p_{e} = \frac{1}{2} \kappa_1 v^3 + \kappa_2 v,
\end{equation}
where \( \kappa_1 \) denotes the air resistance coefficient (determined by air density, fluid drag coefficient and frontal area), \( \kappa_2 \) denotes the friction coefficient (determined by downforce and rolling friction factor). For other more complex moving resistance model, one can refer to \cite{rolling2} and \cite{rolling3}.
The energy consumption of the LiDAR is assumed to be a constant \( E_{L} \) for each period. 

Therefore, for our entire SLAM mapping task, the total energy consumption of the robot is defined as the sum-energy of the communication energy consumption \(E_{\text{comm}}\), LiDAR energy consumption \(E_{\text{LiDAR}}\) and mechanical  energy consumption \(E_{\text{mech}}\) :
\begin{equation}
    E_{\text{total}} = E_{\text{comm}} + E_{\text{LiDAR}}+ E_{\text{mech}},
    \label{Etotal}
\end{equation}
where
\begin{subequations}
    \begin{align}
    & E_{\text{comm}}= \sum_{k=2}^{N_m+1} p_{tx, k} t_{\text{comm}}\tag{\ref{Etotal}{a}}, \\
    & E_{\text{LiDAR}}= N_m  E_{L}\tag{\ref{Etotal}{b}},\\
    & E_{\text{mech}}= p_{e}(\frac{4(L-2e)}{v})\tag{\ref{Etotal}{c}}.
    \end{align}
\end{subequations}
\setcounter{equation}{27}

%


\section{Problem formulation and solution}
\subsection{Problem Formulation}
Our objective is to minimize the total energy consumption of the robot for completing the SLAM mapping task while meeting the timeliness requirement, with respect to transmit power, communication time length, movement velocity and LiDAR sensing cycle time (also period length here). To this end, the corresponding optimization problem is formulated as
\setcounter{equation}{25}
\begin{subequations}
\begin{align}
\textbf{(P1)} & \min_{\{p_{\text{tx,k}},\rho,t_{\text{sens}},v\}} \quad E_{total}(p_{\text{tx,k}},\rho,t_{\text{sens}},v) \qquad \qquad \notag\\ 
\text{s.t.} &\qquad I_k\geq 360a_1+6a_2,\label{eq:I_k}\\
&\qquad N_m t_{\text{sens}}\leq T_{max}, \\
&\qquad p_{\text{tx,k}} > 0, \\
&\qquad 0<\rho\leq 1,\\
&\qquad N_m \geq N_D,
\end{align}
\end{subequations}
where $T_{max}$ is defined as the maximum allowed time length to complete the task, \(N_D\) is defined as the minimum number of LiDAR sensing cycles for SLAM, $k=1,2, ..., N_m$. (26a) is the requirement for data transmission. (26b) is the task time requirement. (26c) ensures the effectiveness of the power parameter. (26d) is the requirement for communication time length. (26e) is the requirement on number of LiDAR sensing cycles.

\subsection{Solving approach}
In spite of the coupling effect among the system parameters, the relationship between the total energy consumption and the communication parameters is analyzed as shown in the following lemma.

\begin{lemma}
    For problem (P1), given $v$ and $t_{sens}$, \( E_{\text{total}} \) is minimized when
    \(
        \rho^* = 1. 
    \)
    \label{lemma1}
\end{lemma}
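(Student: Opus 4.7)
The plan is to decouple $\rho$ from the other optimization variables and reduce (P1) to a per-period one-dimensional problem. Once $v$ and $t_{\text{sens}}$ are fixed, $N_m=\lfloor 4(L-2e)/(v\,t_{\text{sens}})\rfloor$ is determined, so $E_{\text{LiDAR}}=N_m E_L$ and $E_{\text{mech}}=p_e\cdot 4(L-2e)/v$ are constants independent of $\rho$ and $\{p_{tx,k}\}$. Minimizing $E_{\text{total}}$ therefore reduces to minimizing $E_{\text{comm}}=\sum_{k=2}^{N_m+1}p_{tx,k}\,\rho\, t_{\text{sens}}$ subject only to the per-period throughput constraints (26a) and $0<\rho\leq 1$.

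Next I would show that at any optimum the constraint $I_k\geq 360a_1+6a_2$ is tight for every $k$: otherwise $p_{tx,k}$ could be decreased slightly while preserving feasibility, strictly lowering $E_{\text{comm}}$. Because $\rho$ is common but each $p_{tx,k}$ enters only its own period, the residual problem separates across $k$, and it suffices to prove that for every $k$ the per-period energy $E_k(\rho):=p_{tx,k}(\rho)\,\rho\, t_{\text{sens}}$ is non-increasing in $\rho$, where $p_{tx,k}(\rho)$ is defined implicitly by $I_k(p_{tx,k},\rho)=360a_1+6a_2$.

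For the monotonicity step I would exploit that within one period the robot moves only $v\,t_{\text{sens}}$, which is small compared with the robot-to-AP distance $d(t)$, so the channel-gain factor $A_k:=G_t G_r\lambda^2|h_k|^2/[(4\pi d)^2\sigma_k^2]$ can be treated as constant over the $k$-th period. Under this simplification the tight constraint becomes $B\rho t_{\text{sens}}\log_2(1+A_k p_{tx,k})=360a_1+6a_2$, giving the closed form $E_k(\rho)=(\rho t_{\text{sens}}/A_k)(2^{(360a_1+6a_2)/(B\rho t_{\text{sens}})}-1)$. Setting $y:=(360a_1+6a_2)/(B\rho t_{\text{sens}})$, differentiation yields $dE_k/d\rho\propto 2^{y}(1-y\ln 2)-1$; the auxiliary function $h(y):=2^{y}(1-y\ln 2)-1$ satisfies $h(0)=0$ and $h'(y)=-2^{y}y(\ln 2)^{2}\leq 0$, so $h(y)\leq 0$ for all $y\geq 0$. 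Hence $dE_k/d\rho\leq 0$ and $E_k$ is non-increasing in $\rho$; summing over $k$ gives $E_{\text{comm}}$ non-increasing in $\rho$, so the upper boundary $\rho^{*}=1$ is optimal.

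The main obstacle is justifying the within-period constant-$A(t)$ approximation. A fully rigorous version would work with the rate integral (eq.~\ref{eq:Total bit}) directly, using the pointwise inequality $x/(1+x)\leq \ln(1+x)$ after implicit differentiation of $I_k$ with respect to $\rho$ and $p_{tx,k}$; the subtle point is that $A(t)$ on the newly added tail $[(k-1)t_{\text{sens}}+\rho t_{\text{sens}},\,k t_{\text{sens}}]$ must not be so much worse than $A(t)$ on the earlier portion that widening the window actually raises energy. In the SLAM regime of interest ($t_{\text{sens}}$ short relative to the total lap time and $d(t)$ varying slowly and essentially monotonically within a period) this regularity holds, and the closed-form argument above captures the underlying intuition: by concavity of $\log_2(1+\cdot)$, lengthening the transmission window more than proportionally reduces the required power.
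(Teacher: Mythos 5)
Your overall route matches the paper's: fix $v$ and $t_{\text{sens}}$ so that only $E_{\text{comm}}$ depends on $(\rho,\{p_{tx,k}\})$, observe that the rate constraint (26a) must be tight at the optimum, and then show the per-period energy is non-increasing in $\rho$. Where you diverge is the monotonicity step. The paper keeps the time-varying distance $d(t)$ inside the rate integral, discretizes it into $N_s$ sub-intervals, rewrites $E_{\text{comm}}$ as $\sum_{k,i}\beta_{1,k,i}p_{tx,k}/\log_2(1+\beta_{2,k,i}p_{tx,k})$, proves $\partial E_{\text{comm}}/\partial p_{tx,k}>0$, and separately obtains $\partial p_{tx,k}/\partial\rho<0$ by implicit differentiation of the tight constraint, combining the two via the chain rule. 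You instead freeze the channel gain within a period and get the closed form $E_k(\rho)=(\rho t_{\text{sens}}/A_k)\bigl(2^{(360a_1+6a_2)/(B\rho t_{\text{sens}})}-1\bigr)$, whose monotonicity follows from $h(y)=2^{y}(1-y\ln 2)-1\leq 0$; that calculation is correct and is essentially the standard ``energy per bit decreases with transmission time'' argument, more elementary and more transparent than the paper's. What you buy is an explicit formula and a one-line sign check; what the paper buys is (nominally) coverage of the time-varying gain. The obstacle you flag is genuine: if $A(t)$ degrades sharply on the tail added by enlarging $\rho$, the marginal rate gained can be too small to offset the longer constant-power transmission, and $p_{tx,k}\rho t_{\text{sens}}$ can increase (in the degenerate case $A(t)=0$ on the tail, $p_{tx,k}$ stops decreasing while the duration keeps growing). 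The paper's proof does not actually close this hole either --- its chain-rule step asserts $dI_{k,i}/d\rho=0$ and treats the per-sub-interval data allocation as fixed, which quietly assumes the same within-period regularity you invoke. So your proposal is on par with the paper's proof in rigor, and more honest about where the slowly-varying-$d(t)$ assumption is being used.
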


\begin{proof}
See Appendix A.
\end{proof}

According to Lemma \ref{lemma1}, the optimal \(p^*_{\text{tx}, k}\) can be achieved based on the following relationship:
\begin{equation}
    \begin{aligned}
        &p^*_{\text{tx}, k}=\\&\left\{p_{\text{tx}, k}\thinspace|\thinspace2^{\frac{N_s(360a_1+6a_2)}{B t_{\text{sens}}}}=\prod_{i=1}^{N_s} \left(1+\frac{p_{\text{tx}, k} \mu_k}{d^2[(k-1+\frac{i}{N_s})t_{\text{sens}}]}\right)\right\}.
        \label{ptxk}
    \end{aligned}
\end{equation}
for \( k=2,3,...,N_m+1\). Then, problem (P1) can be transformed into the following problem: 
\begin{subequations}
\begin{align}
\textbf{(P2)} & \min_{\{t_{\text{sens}},v\}} \quad E_{total}(p^*_{tx,N_m,k},\rho^*,t_{\text{sens}},v) \qquad \qquad \notag\\ 
\text{s.t.}
&\qquad N_m t_{\text{sens}}\leq T_{max},\label{eq:T_max}\\
&\qquad N_m\geq N_D,\label{eq:N_D}
\end{align}
\end{subequations}
where \(p^*_{tx,N_m,k}\) represents the transmission power in the \(k\)-th period and variable \(N_m\) is determined by both \(t_{sens}\) and \(v\). Thus, the total energy can be expressed as
\begin{equation}
\small
\begin{aligned}
&E_{\text{total}}(p^*_{tx,N_m,k},\rho^*,t_{\text{sens}},v) \\&= \left(\frac{1}{2} \kappa_1 v^2 + \kappa_2\right)(4(L-2e))+N_m E_{L}+\sum_{k=2}^{N_m+1} \left( p^*_{tx,N_m,k}t_{\text{sens}} \right).\\
\end{aligned}
\label{eq:total_energy2}
\end{equation}
Given a constant \(v\) and assume \(N_D>>1\), then \(N_m\approx \frac{4(L-2e)}{v t_\text{sens}}\).  
According to (\ref{eq:N_D}), we can have 
\begin{equation} \label{eq:boundtsensv}
    t_{sens}\leq\frac{4(L-2e)}{v N_D}\leq \frac{T_{max}}{N_D}.
\end{equation}
It can be observed from (\ref{eq:total_energy2}) that the energy consumption for robot movement and LiDAR sensing decreases as $t_{sens}$ increases regardless of the physical coefficients. Thus, it is necessary to validate how the total energy changes with $t_{sens}$. As shown in Fig. \ref{fig:evstens}, the total energy as well as the communication energy monotonically decrease with $t_{sens}$, where $t_{sens}$ is taken in a practical range for real-world LiDAR hardware. 
\begin{figure}
    \centering
    \includegraphics[width=1.0\linewidth]{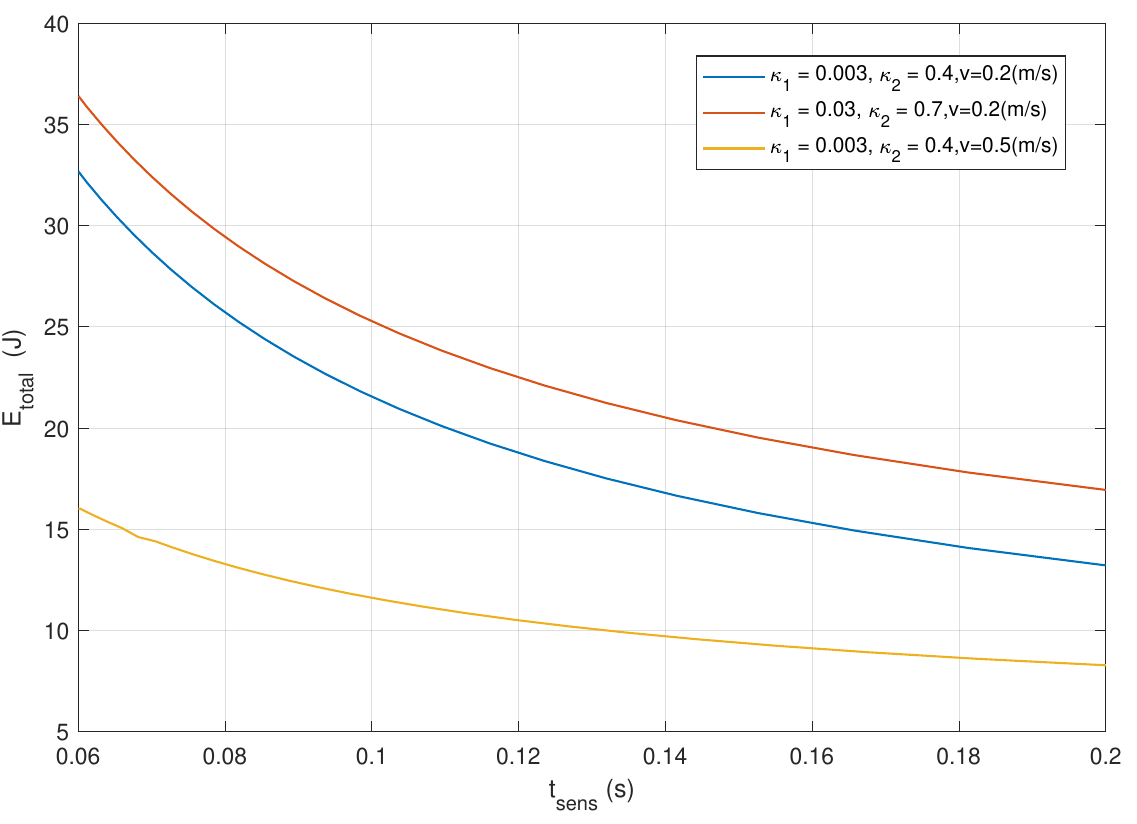}
    \caption{Total energy versus sensing period length}
    \label{fig:evstens}
\end{figure}
Therefore, the optimal period length is achieved as
\begin{equation}
    t_{sens}^*=\frac{4(L-2e)}{v N_D}.
    \label{ts}
\end{equation}
Consequently, \(E_{\text{total}}\) with respect only to speed is expressed as 
\begin{equation}
    \begin{aligned}
        E_{\text{total}}&(v) \\&= \left(\frac{1}{2} \kappa_1 v^2 + \kappa_2\right)(4(L-2e))+ N_D E_{L}
 \\&\quad+ \frac{4(L-2e)}{v N_D}\sum_{k=2}^{N_D+1}p^*_{tx,N_D,k}.
    \end{aligned}
\end{equation}
Note that $p^*_{tx,N_D,k}$ is still related to $v$ and is very hard to get closed-form solution. Since the real-time communication distance in each period has a maximum value $d_{k,max}$, we analyze the upper bound of $E_{total}$. When $N_D$ is relatively large, the upper bound is close to the real value.

Based on Lemma 1, the corresponding transmit powers in each period are achieved by
\begin{equation}
    \begin{aligned}
        &p_{tx,N_D,k,max}\\&=\left\{p_{tx,N_D, k}\thinspace|\thinspace2^{\frac{N_s(360a_1+6a_2)N_D}{4B(L-2e)}}=\prod_{i=1}^{N_s} \left(1+\frac{p_{\text{tx},N_D, k} \mu_k}{d_{k,max}^{2}}\right)\right\}\\
        &=\left(2^{\xi v}-1\right)\frac{d^{2}_{k,max}}{\mu_k},
    \end{aligned}
\end{equation}
where
\begin{equation}
    \xi=\frac{(360a_1+6a_2)N_D}{4B(L-2e)}.
\end{equation}
 Thus, the upper bound of total energy is derived as 
\begin{equation}
    \begin{aligned}
        E_{total,up}&(v) \\&= \left(\frac{1}{2} \kappa_1 v^2 + \kappa_2\right)(4(L-2e))+ N_D E_{L}
 \\&\quad+ \frac{4(L-2e)}{v N_D}\sum_{k=2}^{N_D+1}\left(2^{\xi v}-1\right)\frac{d^{2}_{k,max}}{\mu_k}.
    \end{aligned}
\end{equation}
The derivative of \(E_{total,up}\) is 
\begin{equation}
    \begin{aligned}
    &\frac{d E_{total,up}(v) }{d v}=\\&4(L-2e)\left[\frac{v^3 \kappa_1 N_D + \left( (v \xi \ln 2 - 1) 2^{\xi v} + 1 \right) \sum_{k=2}^{N_D+1} \frac{d_{k, max}^2}{\mu_k} }{N_D v^2} \right].
    \end{aligned}
\end{equation}
Let 
\begin{equation}
    \eta(v)=v^3 \kappa_1 N_D + \left( (v \xi \ln 2 - 1) 2^{\xi v} + 1 \right) \sum_{k=2}^{N_D+1} \frac{d_{k, max}^2}{\mu_k} .
\end{equation}
It can be observed that, \(\eta(v)\) is a monotonically increasing function of $v$ for $v>0$, and \(\eta(v) = 0^+|_{v=0^+}\) Thus,
\begin{equation}
    \frac{d E_{total,up}(v) }{d v}>0.
\end{equation}
According to (\ref{eq:T_max}) and (\ref{ts}), we have a requirement on speed:
\begin{equation}
    v\geq \frac{4(L-2e)}{T_{max}}.
\end{equation}
Therefore, the optimal \(v\) that minimizes \(E_{total,up}(v)\) is
\begin{equation}
    v^*=\frac{4(L-2e)}{T_{max}},
\end{equation}
and eventually
\begin{equation}
    \begin{aligned}
        &E_{total,up,min}= \\& \left(\frac{1}{2} \kappa_1 \frac{4(L-2e)}{T_{max}}^2 + \kappa_2\right)(4(L-2e))+ N_D E_{L}
 \\&\quad+ \frac{T_{max}}{N_D}\sum_{k=2}^{N_D+1}\left(2^{ \frac{4 \xi (L-2e)}{T_{max}} }-1\right)\frac{d^{2}_{k,max}}{\mu_k}.
    \end{aligned}
\end{equation}

\section{Experiment and Numerical Results}
This section provides the numerical results in which the map reconstruction is trained based on SLAM experiments.  

\textbf{Dataset.} The map reconstruction process is realized based on a dataset built by ourself. A microROS robot, which is equipped with a 2D LiDAR of 360-degree \emph{field of view} (FOV) and an odometry, is used to obtain the cloud point raw data. The data is transmitted back via a WiFi link. As shown in Fig. X, we acquire the LiDAR raw data by creating a 2.25x2.25 $m^2$ square area by fence and generally classify it into two subset, i.e., edge and corner. Based on the preset trajectory of robot movement, we totally accumulate cloud point data of 20 cycles in that area, i.e., about 180 m edge and 80 corners. We randomly select part of this dataset and create up to 20x20 $m^2$ square area for map learning and energy consumption evaluation. The deep learning network parameters are optimized and trained on an Nvidia GTX3080.  

\textbf{System parameters.} For communication, the bandwidth is set as $B=10$ MHz. The wavelength $\lambda$ is 0.125 m for 2.4 GHz. The average noise power spectral density over this band is -110 dBm/Hz. The channel under multipath is modeled as Rice distribution. For 2D LiDAR sensing, the number of bits to represent cloud point data and odometry data is $a_1=a_2=64$. The time length of a single sensing cycle is considered within 0.06 s to 0.2 s, i.e., about 5  to 16 Hz scan frequency. For each sensing cycle, the energy $E_L$ is set to be 0.025 J. For robot movement, $\kappa_1$ and $\kappa_2$ are set to be 0.003 and 0.4, respectively. The area size $L$ is considered from 2 to 20 m and $e$ is 0.45 m. The allowed time for SLAM task is set as $T_{max}=40 
$ s. The requirement on the number of periods is $N_D=400$.

\begin{figure}
    \centering
    \includegraphics[width=1.0\linewidth]{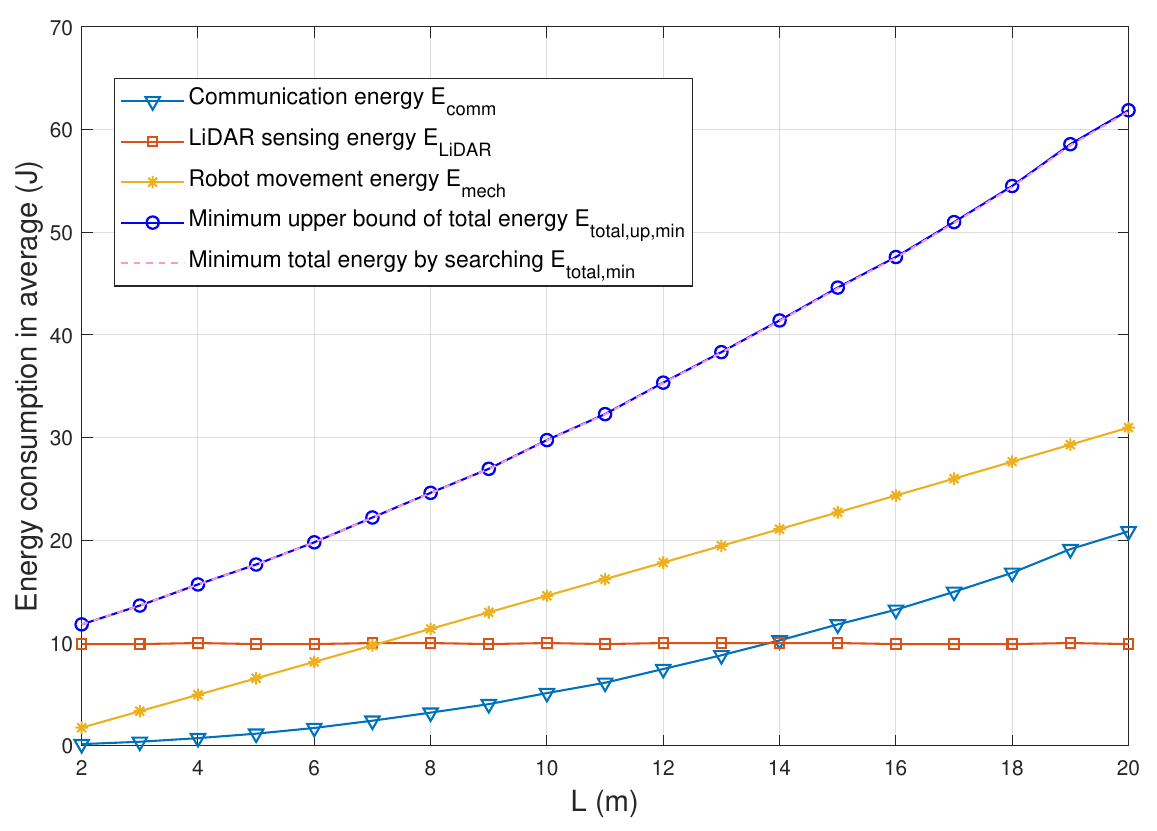}
    \caption{Energy consumption versus size of task area}
    \label{fig:energytotalvsL}
\end{figure}

Fig. \ref{fig:energytotalvsL} illustrates the energy consumption of the entire robot system and the corresponding sensing, communication, movement part with the increasing area size. The edge length is considered ranging from 2 to 20 m. It can be observed that the communication energy exponentially increases according to the wave propagation property. The energy for robot movement nearly linearly increases, since the moving resistance coefficient is small in our simulation for the small robot and the movement speed requirement is also small. The energy for LiDAR sensing remains constant based on our system setting. So, for relatively small area under lifelong SLAM, we need to pay more attention to the energy-efficiency of sensing and movement, but this is no longer true as the area becomes larger. 

Fig. \ref{fig:maprecons} depicts a reconstructed map based on one-cycle run. In real-world environment, the SLAM routine is usually not good with the pre-designed due to many factors such as the odometry errors that influence both mapping and communication operations. So, more advanced pose estimation and localization algorithms to adapt to various real-world odometries are needed. High-precise estimation and localization will also bring great benefit for communication scheme design.


\begin{figure}
    \centering
    \includegraphics[width=1.0\linewidth]{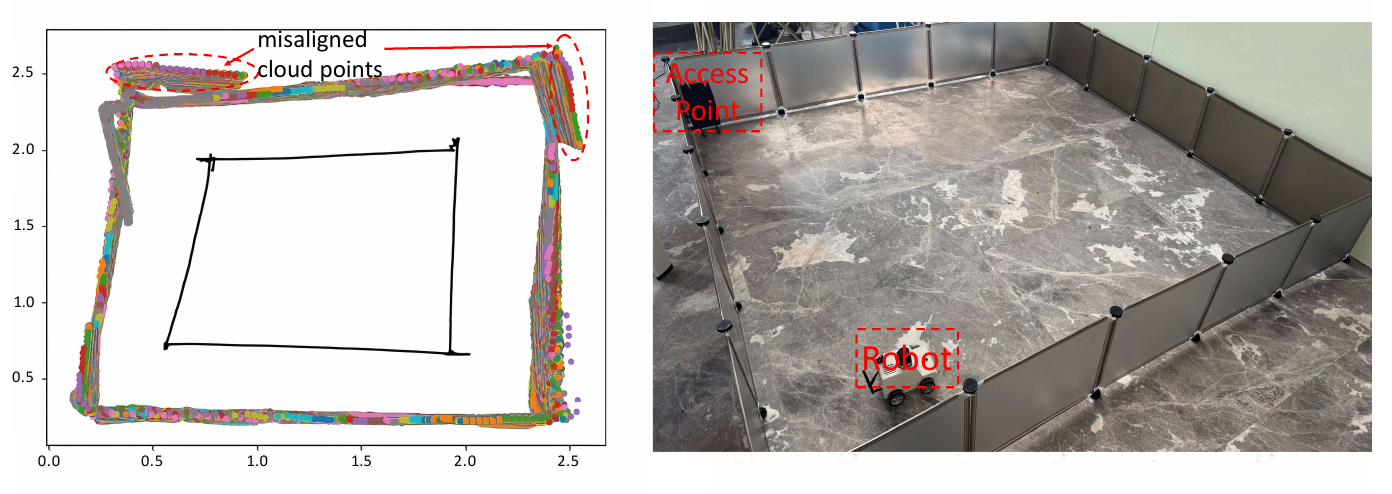}
    \caption{Map reconstruction of a 2.25x2.25 $m^2$ area (run for one cycle)}
    \label{fig:maprecons}
\end{figure}


\section{CONCLUSION}
This paper focuses on the energy consumption problem to support energy-efficient lifelong SLAM and the corresponding potential spatial machine intelligence applications. The system model is built based on a robot performing SLAM via 2D LiDAR sensing and deep learning based map reconstruction method. It is found that the energy proportions of sensing, communication and movement are quite different with increasing area sizes. Our results demonstrate a promising new perspective to analyze the energy efficiency for lifelong SLAM, leading to many interesting research directions such as multi-agent resource allocation, more efficient design of channel estimation, re-localization and loop closure.


\appendix
\subsection{Proof of Lemma 1}
\begin{proof}
Give a constant $v$ and $t_{sens}$, $E_{\text{total}}$ is simplified as a function with respect to $\rho$ and $p_{tx,k}$ that are only relevant with $E_{comm}$.

Due to the difficulty to achieve the closed-form expression of the continuous-time integration in (\ref{eq:Total bit}), for the interval \([(k-1)t_{sens}, (k-1)t_{sens}]\), it can be discretized into \(N_s\) equal-length sub-intervals and the \(i_{th}\) sub-interval is \([(k-1+\frac{(i-1)}{N_s})t_{sens}, (k-1+\frac{i}{N_s})t_{sens}]\) where \(i = 1,2,\cdots,N_s\). For the \(i_{th}\) sub-interval,  we defined \(\rho_i\) as the ratio between communication time length to the sub-interval length, where \(\rho_i \in (0,1]\). Thus, we have
    \begin{equation}
        \sum_{i=1}^{N_s}\rho_i \frac{t_{sens}}{N_s}=\rho t_{sens}.
        \label{p}
    \end{equation}
\(I_k\) can be approximately expressed in the following form:
    \begin{equation}
         I_k\approx \sum_{i=1}^{N_s} B \log_2 \left( 1 + \frac{p_{\text{tx}, k} \mu_k}{d^2[(k-1+\frac{i}{N_s})t_{sens})]} \right)\frac{1}{N_s}\rho_i t_{\text{sens}},
         \label{eq:IK1}
    \end{equation}
    where $N_s$ should be chosen as a relatively large positive integer, and let
    \begin{equation}\label{eq:optimization}
     \mu_k= \frac{ G_t G_r \lambda^2 |h_k|^2}{\sigma^2_k(4 \pi)^2}.
    \end{equation}
    For the \({i_{th}}\) sub-interval, let
     \begin{equation}
         I_{k,i}= B \log_2 \left( 1 + \frac{p_{\text{tx}, k} \mu_k}{d^2[(k-1+\frac{i}{N_s})t_{\text{sens}}]} \right)\frac{1}{N_s}\rho_i t_{\text{sens}},
         \label{eq:IK2}
    \end{equation}
    for $i=1,2,..., N_s$, denotes the data amount that can be transmitted within the \({i_{th}}\) sub-interval. \(I_{k,i}\) is a parameter that is related to \(I_k\) and independent of \(\rho_i\)
    
    Based on the above derivations, \(E_{comm}\) can be expressed as
    \begin{equation}
        \begin{aligned}
             E_{\text{comm}}&\approx \sum_{k=1}^{N_m+1} p_{tx, k} \sum_{i=1}^{N_S}\frac{\rho_i t_{sens}}{N_s}\\
             &\approx\sum_{k=1}^{N_m+1}\sum_{i=1}^{N_s}\frac{\beta_{1,k,i}p_{\text{tx}, k}}{\log_2 \left( 1 +\beta_{2,k,i} p_{\text{tx}, k} \right) }.
        \end{aligned}
    \end{equation}
    where \(\beta_{1,k,i}=\frac{I_{k,i}}{B} >0\), \(\beta_{2,k,i}=\frac{\mu_k}{d^2[(k-1+\frac{i}{N_s})t_{\text{sens}}]}>0\).
    The partial derivative of \(E_{comm}\) with respect to \( p_{\text{tx}, k} \) is given by
    \small{
    \begin{equation}
        \begin{aligned}
            &\frac{\partial E_{comm}}{\partial p_{\text{tx}, k}} =\\
            &\ln{2}\sum_{k=1}^{N_m}\sum_{i=1}^{N_s}\frac{\beta_{1,k,i}[(1+ p_{\text{tx}, k}\beta_{2,k,i})\ln{(1+ p_{\text{tx}, k}\beta_{2,i})}-p_{\text{tx}, k}\beta_{2,k,i}]}{(1+ p_{\text{tx}, k}\beta_{2,k,i})\ln^2{(1+ p_{\text{tx}, k}\beta_{2,i})}}.
        \end{aligned}
    \end{equation}}
    Let 
    \begin{equation}
        \begin{aligned}
        &\zeta_k(p_{\text{tx}})=\\&\beta_{1,k,i}[(1+ p_{\text{tx}, k}\beta_{2,k,i})\ln{(1+ p_{\text{tx}, k}\beta_{2,k,i})}-p_{\text{tx}, k}\beta_{2,k,i}].
        \end{aligned}
    \end{equation}
    The derivative of \( \zeta_k (p_{\text{tx}}) \) with respect to \( p_{\text{tx}, k} \) is given by
    \begin{equation}
         \frac{d \zeta_k (p_{\text{tx}})}{d p_{\text{tx}, k}}=\beta_{1,k,i}\beta_{2,k,i}\ln{(1+p_{\text{tx}, k}\beta_{2,k,i})}>0,
    \end{equation}
    It can be verified that \(\zeta_k (p_{\text{tx}})> 0\). Therefore,
    \begin{equation}
        \frac{\partial E_{comm}}{\partial p_{\text{tx}, k}} > 0.
    \end{equation}
    According to formula (\ref{eq:Total bit}) and (\ref{eq:I_k}),we have
    \begin{equation}
        360a_1+6a_2= \int_{(\mathrm{k} - 1) t_{\text{sens}}}^{(\mathrm{k} - 1+\rho) t_{\text{sens}}} B \log_2\left(1 + \frac{p_{\text{tx}, k}\mu_k}{d^2(t)}\right) dt.
        \label{Ptx}
    \end{equation}
    So, according to the implicit function derivative rule \cite{derivative}, we have
    \begin{equation}
        \frac{\partial p_{\text{tx}, k}}{\partial \rho}=-\frac{\ln{\left(1 + \frac{p_{\text{tx}, k}\mu_k}{d^2[(k-1+\rho)t_{\text{sens}}]}\right)t_{sens}}}{\int_{(\mathrm{k} - 1) t_{\text{sens}}}^{(\mathrm{k} - 1+\rho) t_{\text{sens}}} \frac{\mu_k}{d^2(t)+p_{\text{tx}, k}\mu_k} dt}<0.
    \end{equation}
Therefore, regarding to the derivative of $E_{comm}$ with respect to $\rho$, i.e.,
    \begin{equation}
        \frac{d E_{comm}}{d \rho} =\frac{\partial E_{comm}}{\partial p_{\text{tx}, k}} \frac{d p_{\text{tx}, k}}{d \rho}+\sum_{i=1}^{N_s}\frac{\partial E_{comm}}{\partial I_{k,i}} \frac{d I_{k,i}}{d \rho},
    \end{equation}
    since \(I_{k,i}\) and \(\rho\) are independent, we have \(\frac{d I_{k,i}}{d \rho}=0\). Consequently, we have
    \begin{equation}
        \frac{d E_{comm}}{d \rho}<0.
    \end{equation}
     And the optimal \(\rho^*\) is 
    \begin{equation}
        \rho^*=1
    \end{equation}
    
    End Proof.
\end{proof}

\end{document}